\documentclass[11pt]{article}
\usepackage[final]{acl}
\usepackage{xcolor}
\usepackage{times}
\usepackage{latexsym}
\usepackage[most]{tcolorbox}
\usepackage[T1]{fontenc}
\usepackage{times}
\usepackage{latexsym}
\usepackage{graphicx}
\usepackage{hyperref}
\usepackage{fancyvrb} 
\usepackage{tcolorbox} 
\usepackage{booktabs}
\usepackage{enumitem}
\usepackage{algorithm,algpseudocode}

\usepackage{amsmath,amssymb,mathtools,amsthm}

\newtheorem{proposition}{Proposition}

\newcommand{\bx}{{\mathbf{x}}}
\newcommand{\by}{{\mathbf{y}}}

\usepackage[utf8]{inputenc}

\usepackage{microtype}

\usepackage{inconsolata}

\usepackage{graphicx}

\newcommand{\SOVER}{\texttt{SOVER}}
\newcommand{\zt}{\textsf{Z3}}
\newcommand{\dReal}{\textsf{dReal}}
\newcommand{\EM}{\textsf{EquivaMap}}

\usepackage{amsmath}
\DeclareMathOperator*{\argmin}{arg\,min}

\title{SOVER: Formal Certification of Optimization Reformulations via LLM-Assisted SMT Verification}

\author{Swapnil Bhattacharyya\\
  TCS Research, Mumbai\\
  \texttt{b.swapnil@tcs.com} \\\And
  Mayank Baranwal \\
  TCS Research, Mumbai\\
  Indian Institute of Technology Bombay\\
  \texttt{mbaranwal@iitb.ac.in} \\}

\begin{document}
\maketitle
\begin{abstract}
Large Language Models (LLMs) have shown remarkable promise in translating and reformulating complex mathematical optimization problems across modeling languages. However, validating such transformations through empirical solver executions alone is unreliable, as solver outcomes may be affected by local minima, structural timeouts, numerical artifacts, and subtle semantic divergence between formulations. We introduce \SOVER, an LLM-assisted SMT framework that separates semantic mapping from formal certification: \zt{} checks domain cross-feasibility and global objective-order preservation for mixed-integer linear formulations, while \dReal{} provides tolerance-aware feasibility/range and $\epsilon$-argmin checks for continuous nonlinear formulations. We also introduce \textsc{NLEquiv-150}, a public benchmark of 100 equivalent and 50 deliberately hard non-equivalent nonlinear reformulation pairs. With LLM-extracted mappings, \SOVER{} classifies 149/150 pairs (99.33\%) correctly, including all 50 hard negatives; the sole error is an incomplete mapping extraction.
\end{abstract}

\section{Introduction}\label{sec:Intro}
Optimization is central to scientific and engineering decision-making, with applications in energy-cost reduction, supply-chain planning, and profit maximization \cite{singh2012overview}. Combinatorial optimization \cite{le2024survey} further underpins core problems in operations research and computer science, from network routing \cite{DBLP:journals/mp/ChristofidesMT81} to machine learning system design \cite{sun2019survey}. Understanding when two optimization problems are equivalent, or when one reduces to another \cite{karp2009reducibility}, is therefore fundamental: it enables structurally related NP-complete problems \cite{aaronson2005guest}, such as SAT and the Traveling Salesperson problem \cite{pop2024comprehensive}, to be organized into equivalence classes \cite{paulson2006defining} and solved using transferable algorithmic ideas \cite{liu2022teachingnetworkssolveoptimization}.

LLMs are increasingly used as optimization copilots, translating natural language specifications into formal models \cite{huang2025llms} and assisting in solution workflows \cite{zhang2025systematic}. However, LLM-generated formulations remain unreliable \cite{ma2026large}, often containing ambiguous terminology, missing assumptions, inconsistent variables, or structural errors \cite{ahmaditeshnizi2024optimus}. Rigorous equivalence checking is therefore essential \cite{wang2025large}, especially when AI-generated models \cite{yao2025fact} must be validated against trusted ground-truth formulations \cite{zhou2025step} while preserving intended mathematical semantics \cite{xiao2025survey}.

\begin{figure*}[!ht]
	\begin{center}
		\begin{tabular}{cc}
			\includegraphics[width=0.94\columnwidth]{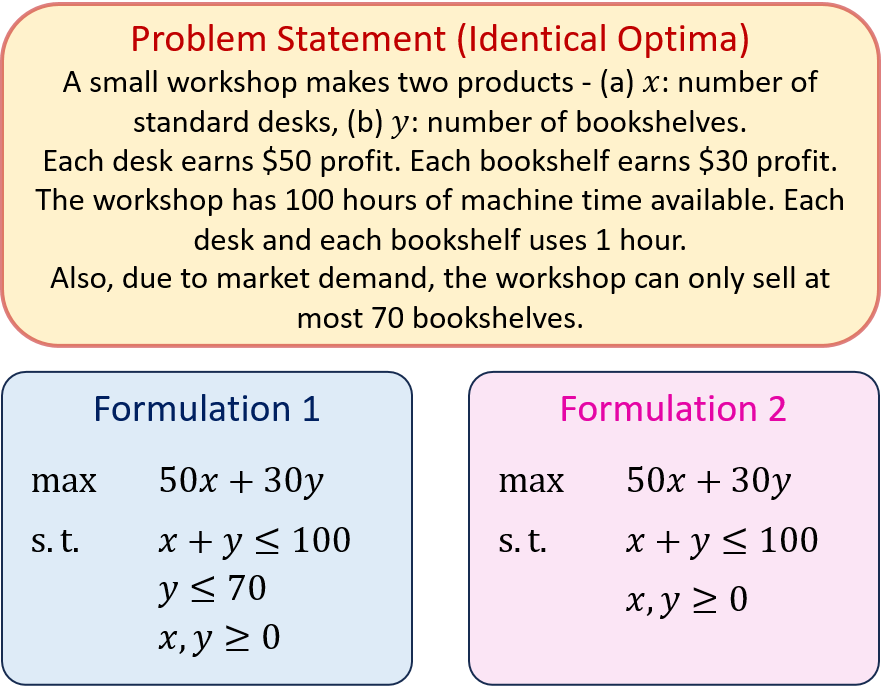} & \includegraphics[width=0.94\columnwidth]{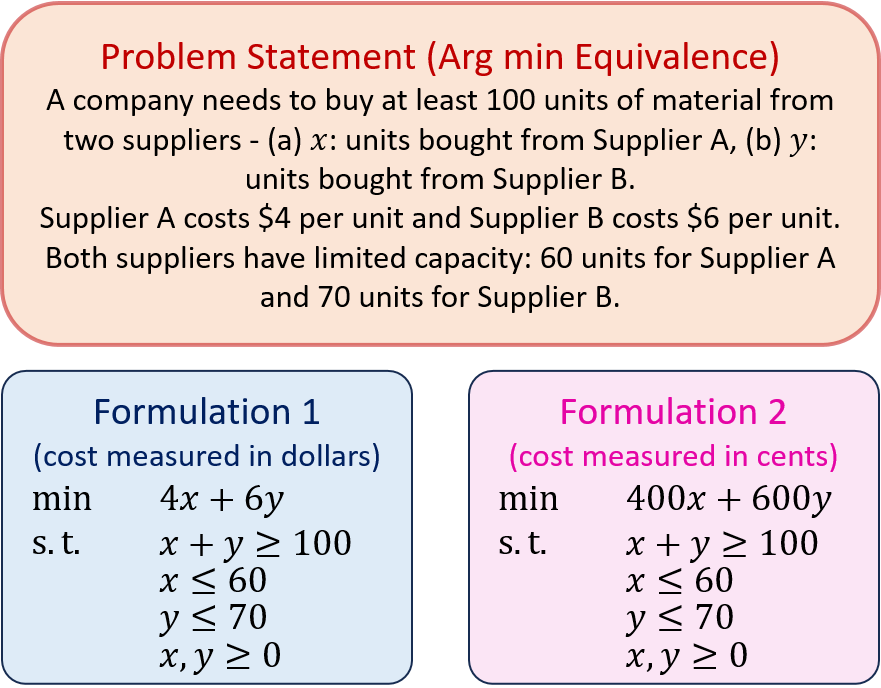} \cr
			(a) Identical optimum scenario & (b) Diff. optimal values, yet $\argmin$ equivalent \cr
			\includegraphics[width=0.94\columnwidth]{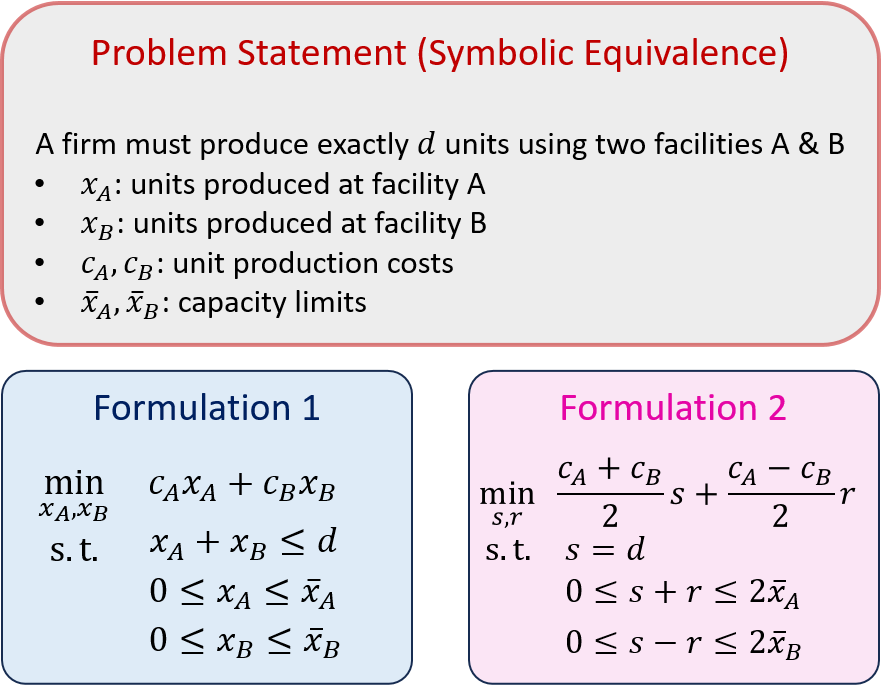} & \includegraphics[width=0.94\columnwidth]{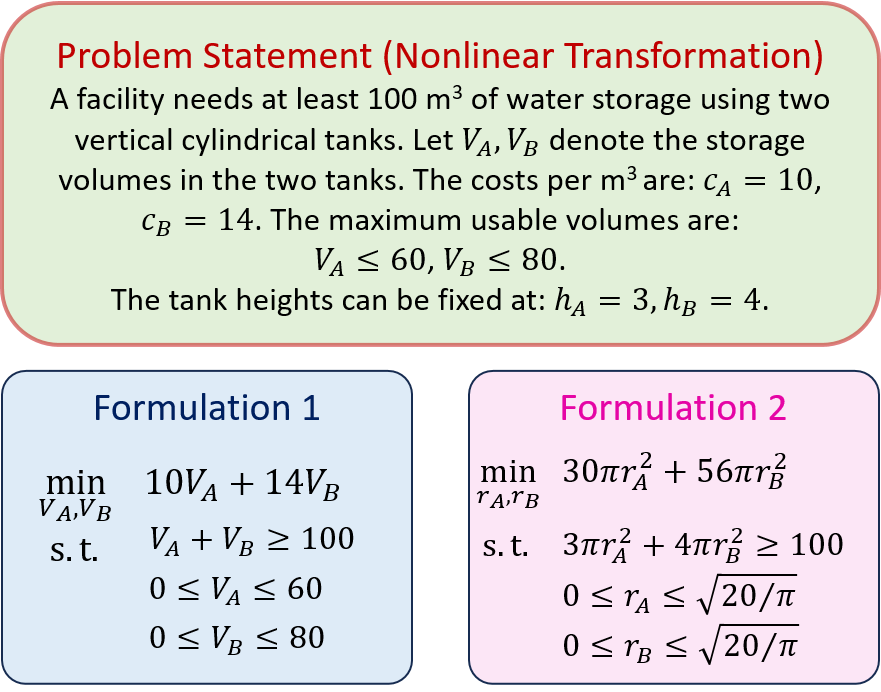} \cr
			(c) Symbolic formulations & (d) Nonlinear transformation
		\end{tabular}
	\end{center}
	\caption{\textbf{Challenges in assessing optimization reformulations}. (a) Existing methods may fail to distinguish genuinely different problems when inactive constraints yield the same optimizer and objective value. (b) Conversely, they may incorrectly flag equivalent formulations with scaled objectives as different. (c) Robust reformulation analysis should also be parameter-agnostic, and (d) capable of recognizing equivalence across nonlinear representations.}
	\label{fig:challenges}
    \vspace{-1em}
\end{figure*}

A central challenge is that reformulation equivalence is not captured by equality of solver outputs. As shown in Figure~\ref{fig:challenges}, different formulations may share the same optimizer and optimal value when constraints are inactive, while equivalent formulations can differ when objectives are scaled. Robust verification must also handle symbolic parameters and recognize equivalence across heterogeneous representations, such as an LP becoming an NLP under a variable transformation. Thus, optimal-value comparison, sampled solutions, and syntactic similarity produce both false positives/negatives.

% A central challenge is that reformulation equivalence is not captured by equality of solver outputs. As illustrated in Figure~\ref{fig:challenges}, genuinely different formulations may share the same optimizer and optimal value when some constraints are inactive, while equivalent formulations may appear different when their objectives differ by a positive scaling. Robust verification should also handle symbolic parameters and recognize equivalence across heterogeneous representations, including cases where an LP in one physical variable becomes an NLP under another. Thus, optimal-value comparison, sampled solutions, and syntactic similarity can produce both false positives and false negatives.

Recent work has begun to study LLM-assisted equivalence checking. In particular, \EM~\cite{zhai2025equivamap} uses LLMs to infer transformations between decision variables and then verifies that the mapped formulations preserve feasibility and optimality. While this is an important step toward automated reformulation analysis, the broader challenges in Figure~\ref{fig:challenges} require verification that is robust to inactive constraints, scaled or monotone objectives, symbolic parameters, and heterogeneous linear/nonlinear representations. Traditional testing cannot exhaustively cover these semantic boundaries \cite{mahmoud2024formal}, and the brittleness of generative modeling further motivates provable validation mechanisms \cite{chen2026solver} that separate structural interpretation from logical verification \cite{pei-etal-2025-fover}.

We propose \SOVER, a hybrid framework for provably correct optimization reformulation. \SOVER{} uses LLMs to parse optimization statements, align variables and parameters, and propose symbolic correspondences. It then discharges verification obligations using SMT solvers under a shared logical representation. Instead of relying on agreement over finitely many solver executions, \SOVER{} checks domain cross-feasibility and global objective-order preservation, thereby establishing $\arg\min$ equivalence. Moreover, \SOVER{} solves only the trusted source optimization problem and verifies reformulation correctness through SMT-based feasibility and ordering checks, enabling efficient validation across mixed-integer linear, continuous nonlinear, and non-convex settings.

\noindent This paper makes the following contributions:\vspace{-.5em}
\begin{itemize}[leftmargin=*]
    \item \textbf{Formal reformulation verification.} We introduce \SOVER, an LLM-assisted SMT framework that encodes reformulation correctness as logical verification conditions over symbolic domains.\vspace{-1em}
    \item \textbf{Equivalence beyond objective-value matching.} \SOVER{} verifies domain cross-feasibility \& global objective-order preservation, avoiding failures from inactive constraints or scaled objectives.\vspace{-1em}
    \item \textbf{Parameter-agnostic and heterogeneous checking.} \SOVER{} supports symbolic parameters and verifies equivalence across different mathematical representations, including LP-to-NLP reformulations induced by variable transformations.\vspace{-1em}
    \item \textbf{Solver-efficient verification.} \SOVER{} requires solving only the trusted source problem and uses SMT queries to verify feasibility and ordering obligations, rather than validating reformulations through independent optimal-value comparisons. On EquivaFormulation, \zt{} verification averages 0.03\,s per pair; end-to-end latency is dominated by LLM-assisted mapping.\vspace{-1em}
    \item \textbf{Support for mixed-integer and nonlinear settings.} The framework integrates \zt{} for mixed-integer linear formulations and \dReal{} for continuous nonlinear formulations via $\delta$-satisfiability.
    \item \textbf{\textsc{NLEquiv-150} benchmark.} We release 100 equivalent and 50 hard non-equivalent application-grounded nonlinear pairs spanning objective, constraint, equality/direction/coefficient, and transformation-range failures.
\end{itemize}

\section{Related Work}\label{sec:RelatedWork}
\textbf{Large Language Models and Tool Integration.}
Recent advances in natural language processing have produced Large Language Models (LLMs) that perform strongly across question answering \cite{brown2020language}, summarization, translation \cite{grattafiori2024llama,team2023gemini}, and code generation \cite{chen2021evaluating}. Their capabilities are further strengthened through integration with external tools for code execution, debugging, and iterative refinement \cite{qin2024toolllm}.

\noindent\textbf{LLMs in Mathematical Optimization.}
Recent work has explored LLMs in operations research and mathematical optimization, either by generating solutions directly through iterative prompting \cite{yang2024large} or by assisting modeling workflows. \cite{chen2024diagnosing} developed a chatbot for diagnosing and repairing infeasible Pyomo models, while OptiMUS \cite{ahmaditeshnizi2024optimus} translates natural language descriptions into MILP formulations and debugs solver code through automated testing. These efforts reflect a larger push toward foundation models and self-improving agents for diverse optimization instances \cite{li2025towards,zhou2025steporlm}.

\noindent\textbf{Formal Equivalence Checking and Verification.}
As LLM-based optimization tools evolve from assistants to automated formulation engines \cite{li2025optimization}, verification becomes a central bottleneck. Generative models may produce syntactically different formulations for the same problem \cite{herrera2025overview}, which makes surface-level matching insufficient. Building on Karp reductions, \EM~\cite{zhai2025equivamap} uses LLMs to infer mappings between decision-variable spaces and applies lightweight verification to assess structural equivalence. More broadly, the safe deployment of generative AI in decision-making pipelines \cite{shi2024large} requires the coupling of LLM-based modeling with rigorous verification. Our work follows this direction by combining LLM-assisted semantic alignment with SMT-based formal checks for LPs, MILPs, and nonlinear reformulations.

\section{Preliminaries}\label{sec:prelim}

\subsection{Constrained Mathematical Optimization}\label{subsec:ConsOpt}
A constrained minimization problem is written as
\[
    \min_{\bx\in\mathcal{X}} f(\bx),
\]
where $f:\mathbb{R}^n\to\mathbb{R}$ is the objective and $\mathcal{X}\subseteq\mathbb{R}^n$ is the feasible region induced by the problem constraints. When comparing formulations, equality of optimal objective values is not sufficient: the relevant object is often the optimal solution set,
\[
    \argmin_{\bx\in\mathcal{X}} f(\bx)
    \!=\!
    \left\{
    \bx^\star\!\in\!\mathcal{X}
    \;\middle|\;
    f(\bx^\star)\!\leq\!f(\bx),\ \forall \bx\!\in\!\mathcal{X}
    \right\}\!.
\]
Two formulations are \emph{$\argmin$ equivalent} if a mapping $\Sigma$ between their feasible domains maps the minimizers of one formulation onto the minimizers of the other. This allows equivalent formulations to have different numerical optimum values, for example when one objective is a positive scaling or strictly increasing transformation of the other.
% Two formulations are said to be \emph{$\argmin$ equivalent} if there exists a mapping $\Sigma$ between their feasible domains that maps the optimal solution set of one formulation onto the optimal solution set of the other. This notion is stronger and more structurally meaningful than comparing only objective values. In particular, two formulations may be $\argmin$ equivalent even when their minimum objective values differ. For example, multiplying the objective by a positive constant changes the numerical value of the optimum but leaves the set of minimizers unchanged. More generally, any strictly increasing transformation of the objective preserves the ordering of feasible points and therefore preserves the $\argmin$ set, even though it may alter the scalar optimum.

\subsection{Satisfiability Modulo Theories (SMT)}\label{subsec:SMT}
SMT extends Boolean satisfiability to first-order formulas over background theories such as real arithmetic, integer arithmetic, arrays, bit-vectors, and uninterpreted functions. In verification, SMT solvers prove a property $\varphi$ by checking whether its negation $\neg\varphi$ is satisfiable. A \texttt{SAT} result gives a concrete counterexample, while \texttt{UNSAT} certifies that no counterexample exists within the specified theory. This makes SMT well suited to reformulation verification: instead of relying on sampled instances or heuristic solver behavior, the verifier directly asks whether any feasible assignment violates the claimed equivalence.

\section{Proposed Methodology}\label{sec:Method}

\SOVER{} verifies whether two optimization formulations represent the same underlying optimization problem under a candidate reformulation map. The framework separates the task into two components: an LLM-assisted mapping module proposes correspondences between variables and parameters, while an SMT-based verifier checks whether these correspondences preserve feasible regions and objective order. This design allows the LLM to assist with structural interpretation, while all equivalence claims are discharged by formal logical queries.

\subsection{Mapping Generation}\label{subsec:Mapping}

Verifying reformulation equivalence requires aligning the semantic components of the two models. Given a source formulation $P_A$ and a target formulation $P_B$, \SOVER{} constructs candidate mappings for both decision variables and problem parameters. These mappings define the substitutions used by the SMT verifier to express both formulations in a shared symbolic coordinate system.

\begin{table*}[htbp]
\centering
\begin{tabular}{llc}
\hline
\textbf{Variation ID} & \textbf{Variation Type} & \textbf{Equivalent} \\ 
\hline
\_c & Rename parameters and variables & Yes \\
\_d & Binary substitution of decision variables & Yes \\
\_f & Replace objective term by an equivalent constraint & Yes \\
\_g & Add slack variables & Yes \\
\_h & Linear substitution of decision variables & Yes \\
\_i & Rescale the objective function & Yes \\
\_j & Replace the formulation by an unrelated formulation & No \\
\_k & Convert an unrelated formulation into a feasibility problem & No \\
\_l & Drop constraints that are inactive at the observed optimum & No \\
\_m$^{+}$ & Nonlinear/non-convex coordinate reformulations & Yes\\
\_m$^{-}$ & Near-equivalent nonlinear reformulations with subtle semantic mismatch & No\\
\hline
\end{tabular}
\caption{Taxonomy of problem variations and expected reformulation equivalence. Rows \_c--\_l follow \cite{zhai2025equivamap}; the $\_m^{+}$ and $\_m^{-}$ nonlinear categories are introduced in \textsc{NLEquiv-150}.}
\label{tab:variations}
\end{table*}

\subsubsection{Decision Variable Mapping}

We build on the LLM-assisted mapping discovery strategy of \EM~\cite{zhai2025equivamap}, which uses semantic information to identify correspondences and transformations between decision-variable spaces. However, zero-shot mapping extraction can produce ambiguous or many-to-one correspondences. To reduce such errors, \SOVER{} applies a single refinement pass using a targeted prompting template (Figure~\ref{fig:prompt-variable}, Appendix). This refinement enforces mapping uniqueness and removes inconsistent assignments in which multiple source variables map to the same target variable without an explicit aggregation rule.

\subsubsection{Parameter Mapping}
Equivalent reformulations often transform not only variables but also instance parameters, such as objective coefficients, right-hand side bounds, capacities, or scaling constants. We therefore extend the semantic matching procedure beyond decision variables and extract parameter-level correspondences as well. The extracted parameter relations are passed through an analogous correction prompt (Figure~\ref{fig:prompt-parameter}, Appendix) before symbolic verification.

\subsection{Equivalence Checker}\label{subsec:EquivChecker}

Given the candidate variable and parameter mappings, \SOVER{} constructs a formal verification problem using \zt{}. Each formulation is parsed into a constraint predicate and an objective expression. Specifically, for formulation $P_A$, we write $C_A(\bx)$ for its feasible-region predicate and $O_A(\bx)$ for its objective. The target formulation $P_B$ is translated into the source coordinate system using the proposed substitution map, producing a mapped predicate $\widetilde{C}_B(\bx)$ and mapped objective $\widetilde{O}_B(\bx)$. When $P_B$ contains auxiliary/slack variables that do not enter its objective, $\widetilde{C}_B(\bx)$ denotes the existential projection of those variables, so Proposition~\ref{prop:z3} is applied to the projected feasible set rather than to the lifted coordinates themselves.

The verifier then checks two properties. First, it checks \emph{domain cross-feasibility}, namely whether both formulations define the same feasible set after substitution. Second, it checks \emph{objective-order preservation}, namely whether the two objectives induce the same weak ordering over all feasible pairs of points. This second condition is weaker than algebraic objective identity and therefore accepts valid transformations such as positive rescalings and strictly monotone objective transformations.

As shown in Figure~\ref{fig:method}, the pipeline proceeds from parsing \& symbol declaration to mapping synthesis, feasibility verification, and order-profile checking.

\begin{figure*}[!ht]
    \centering
    \includegraphics[width=1.92\columnwidth]{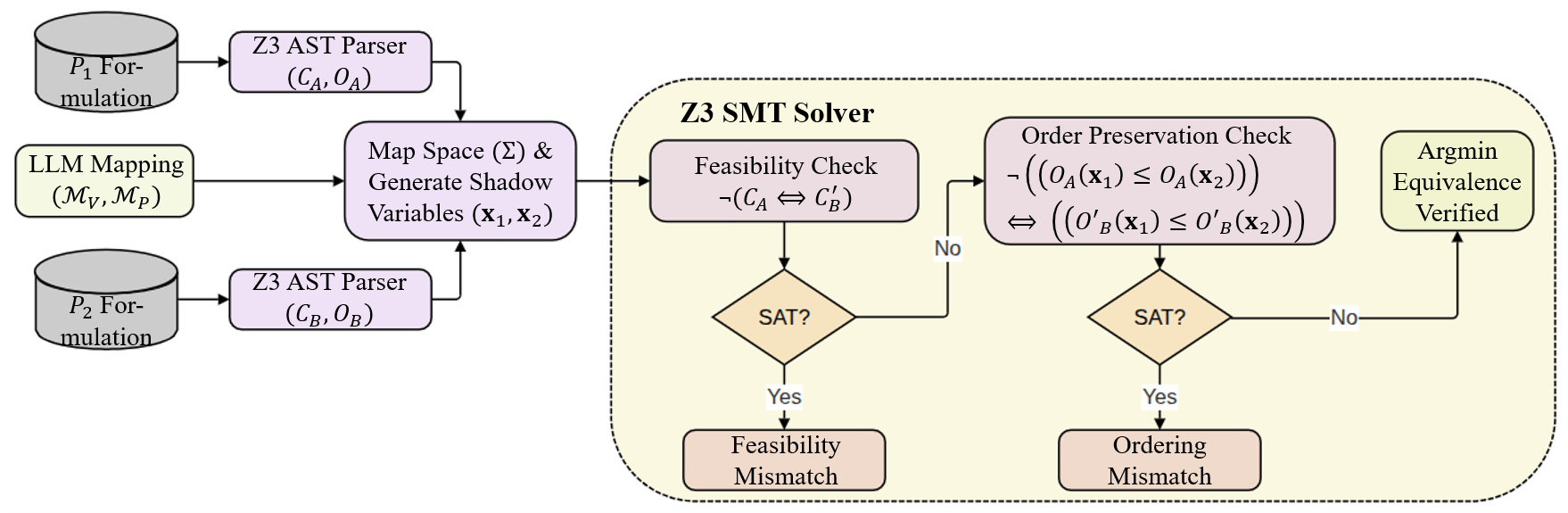}
    \caption{\SOVER{} verification pipeline. The framework parses optimization models, synthesizes variable and parameter alignments, checks domain cross-feasibility, and verifies preservation of the objective-order profile.}
    \label{fig:method}
    \vspace{-1em}
\end{figure*}

\subsection{Two-Stage SMT Verification}\label{subsec:TwoStage}

Algorithm~\ref{alg:SOVER} summarizes the core verification routine. The key idea is to express both formulations in a common symbolic environment and search for counterexamples to equivalence. If no feasibility counterexample exists and no objective-order counterexample exists, the reformulation is certified as $\arg\min$-equivalent under the proposed map.

\begin{algorithm}[!ht]
    \caption{\SOVER: SMT-Based Verification of Reformulation Equivalence}
    \label{alg:SOVER}
    \begin{algorithmic}[1]
    \Require Source problem $P_A$, target problem $P_B$, variable mapping $\mathcal{M}_V$, parameter mapping $\mathcal{M}_P$
    \Ensure Status in \{\texttt{Pass}, \texttt{Feasibility Mismatch}, \texttt{Ordering Mismatch}, \texttt{Inconclusive}\}
    
    \State $\mathcal{E} \gets \emptyset$ \Comment{Initialize symbolic environment}
    \State \Call{DeclareSymbols}{$P_A, \mathcal{E}$}
    \State \Call{DeclareSymbols}{$P_B, \mathcal{E}$}
    
    \State $C_A, O_A \gets$ \Call{ParseModel}{$P_A, \mathcal{E}$}
    \State $C_B, O_B \gets$ \Call{ParseModel}{$P_B, \mathcal{E}$}
    
    \State $\Sigma \gets$ \Call{BuildSubstitutions}{$\mathcal{M}_V \cup \mathcal{M}_P, \mathcal{E}$}
    \State $\widetilde{C}_B \gets \textsc{Substitute}(C_B,\Sigma)$
    \State $\widetilde{O}_B \gets \textsc{Substitute}(O_B,\Sigma)$
    
    \State $\Gamma \gets$ \Call{ShadowCopy}{$\mathrm{Vars}(P_A)$}
    \State $C_A^{(2)}, O_A^{(2)} \gets \textsc{Substitute}(C_A,\Gamma), \textsc{Substitute}(O_A,\Gamma)$
    \State $\widetilde{C}_B^{(2)}, \widetilde{O}_B^{(2)} \gets \textsc{Substitute}(\widetilde{C}_B,\Gamma), \textsc{Substitute}(\widetilde{O}_B,\Gamma)$
    
    \Statex
    \State \textbf{// Stage 1: domain cross-feasibility}
    \State \textbf{Assert} $\neg(C_A \leftrightarrow \widetilde{C}_B)$
    \State $r \gets$ \Call{CheckSatisfiability}{}
    \If{$r = \texttt{SAT}$}
        \State \Return \texttt{Feasibility Mismatch}
    \ElsIf{$r = \texttt{UNKNOWN}$}
        \State \Return \texttt{Inconclusive}
    \EndIf
    
    \Statex
    \State \textbf{// Stage 2: objective-order preservation}
    \State \textbf{Reset solver context}
    \State \textbf{Assert} $C_A \land \widetilde{C}_B$
    \State \textbf{Assert} $C_A^{(2)} \land \widetilde{C}_B^{(2)}$
    \State \textbf{Assert} $\neg\left((O_A \leq O_A^{(2)}) \leftrightarrow 
    (\widetilde{O}_B \leq \widetilde{O}_B^{(2)})\right)$
    \State $r \gets$ \Call{CheckSatisfiability}{}
    \If{$r = \texttt{SAT}$}
        \State \Return \texttt{Ordering Mismatch}
    \ElsIf{$r = \texttt{UNKNOWN}$}
        \State \Return \texttt{Inconclusive}
    \Else
        \State \Return \texttt{Pass}
    \EndIf
    \end{algorithmic}
\end{algorithm}

\subsubsection{Correctness Guarantee}\label{subsubsec:Correctness}

The soundness of \SOVER{} follows from the fact that the SMT queries search directly for counterexamples to feasibility \& $\argmin$ equivalence.

\begin{proposition}[Argmin Equivalence via Weak-Order Preservation]
\label{prop:z3}
Let $P_A$ and $P_B$ be two minimization problems. After applying the candidate substitution map $\Sigma$, let $C_A(\bx)$ and $\widetilde{C}_B(\bx)$ denote their feasible-region predicates in a common coordinate system, and let $O_A(\bx)$ and $\widetilde{O}_B(\bx)$ denote their corresponding objective functions. Suppose the following two properties hold:
\begin{align}
    &\forall \bx,\quad 
    C_A(\bx) \leftrightarrow \widetilde{C}_B(\bx), 
    \label{eq:feas-equiv}\\
    &\forall \bx,\by,\quad 
    C_A(\bx)\land C_A(\by)
    \Rightarrow \nonumber \\
    &\quad \left[
    O_A(\bx) \leq O_A(\by)
    \leftrightarrow
    \widetilde{O}_B(\bx) \leq \widetilde{O}_B(\by)
    \right]\!.
    \label{eq:order-equiv}
\end{align}
Then the two formulations have identical global minimizers in the common coordinate system:
\begin{equation}
    \argmin_{\bx:C_A(\bx)} O_A(\bx)
    =
    \argmin_{\bx:\widetilde{C}_B(\bx)} \widetilde{O}_B(\bx).
\end{equation}
If $\Sigma$ is induced by a bijective coordinate transformation between the original variables of $P_A$ and $P_B$, then the corresponding minimizer sets are equivalent under that transformation.
\end{proposition}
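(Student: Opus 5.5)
The plan is a direct double-inclusion argument on the two minimizer sets, using \eqref{eq:feas-equiv} to identify the feasible regions and \eqref{eq:order-equiv} to transfer optimality.

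First, by \eqref{eq:feas-equiv} the sets $\mathcal{X}_A=\{\bx: C_A(\bx)\}$ and $\mathcal{X}_B=\{\bx:\widetilde{C}_B(\bx)\}$ coincide; call this common set $\mathcal{X}$. Both $\argmin$ sets are therefore taken over the same domain, and it suffices to compare the two minimality conditions on $\mathcal{X}$.

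For the inclusion $\subseteq$, take $\bx^\star\in\argmin_{\mathcal{X}}O_A$. Then $\bx^\star\in\mathcal{X}$ and $O_A(\bx^\star)\le O_A(\by)$ for every $\by\in\mathcal{X}$. For each such $\by$, we have $C_A(\bx^\star)\land C_A(\by)$, so \eqref{eq:order-equiv} applies to the pair $(\bx^\star,\by)$. The forward direction of the biconditional gives $\widetilde{O}_B(\bx^\star)\le\widetilde{O}_B(\by)$. Since $\by\in\mathcal{X}$ was arbitrary, $\bx^\star\in\argmin_{\mathcal{X}}\widetilde{O}_B$. The reverse inclusion is symmetric: it uses the backward direction of the biconditional, and the hypothesis of \eqref{eq:order-equiv} still holds because $\mathcal{X}_B=\mathcal{X}_A$. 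This argument also covers the degenerate cases automatically. If one $\argmin$ is empty, for instance because $\mathcal{X}=\emptyset$ or the infimum is not attained, the other is empty too. No assumption of existence of minimizers or continuity is needed; only the weak-order statement is used.

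For the final claim, suppose $\Sigma$ comes from a bijection $\phi$ between the original variable spaces, so that $\widetilde{C}_B=C_B\circ\phi$ and $\widetilde{O}_B=O_B\circ\phi$. I would note that $\phi$ maps $\{\bx:\widetilde{C}_B(\bx)\}$ bijectively onto the feasible set of $P_B$. It also carries $\argmin$ of $O_B\circ\phi$ onto $\argmin$ of $O_B$, because $\bx$ minimizes $O_B\circ\phi$ exactly when $\phi(\bx)$ minimizes $O_B$, by surjectivity of $\phi$. Combining this with the equality already shown gives $\phi\bigl(\argmin_{C_A}O_A\bigr)=\argmin_{C_B}O_B$.

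The only subtle point is the auxiliary/slack-variable case mentioned before the statement, where $\widetilde{C}_B$ is an existential projection and $\phi$ is not literally a bijection on the lifted space. I would handle this by applying the argument to the projected set, which is what the text stipulates. Minimizers of $P_B$ in the lifted space then project onto minimizers in the common coordinates, because the objective does not depend on the auxiliary variables. Apart from this bookkeeping the proof is routine.
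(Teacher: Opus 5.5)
Your proof is correct and follows the same double-inclusion argument as the paper: you identify the common feasible set via \eqref{eq:feas-equiv}, then transfer minimality in each direction using the corresponding half of the biconditional in \eqref{eq:order-equiv}. You also prove the bijective-transformation clause (via surjectivity of $\phi$) and do the slack-projection bookkeeping, both of which the paper's appendix proof leaves implicit.
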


\begin{proof}
    See Appendix~\ref{prop-proof:z3} for detailed proof.
\end{proof}

\subsection{Extension to Nonlinear Programs via \dReal}\label{subsec:dReal}

The \zt{}-based verifier is well suited to linear, mixed-integer, and piecewise-linear arithmetic. However, reformulations involving transcendental functions, nonlinear coordinate transformations, or non-convex continuous constraints may fall outside the decidable fragments handled by standard SMT procedures. To support such cases, \SOVER{} extends the same verification logic using the \dReal{} $\delta$-complete decision procedure.

\subsubsection{$\delta$-Satisfiability}\label{subsubsec:deltaSatis}

For nonlinear real arithmetic with transcendental functions, exact satisfiability is generally undecidable. \dReal{} addresses this by deciding formulas up to a user-specified numerical tolerance $\delta>0$. Given a formula $\mathcal{F}$, \dReal{} returns either \texttt{UNSAT}, certifying that $\mathcal{F}$ has no solution, or $\delta$-\texttt{SAT}, indicating that a $\delta$-weakened version of the formula may be satisfiable. In our pipeline, \texttt{UNSAT} is treated as a formal certificate, while $\delta$-\texttt{SAT} is treated as a candidate counterexample or an inconclusive result requiring tolerance-aware interpretation.

A raw $\delta$-\texttt{SAT} result for a strict boundary-negation query can be produced by the $\delta$-weakening even when the exact mismatch set is empty. We therefore search only for \emph{margin-separated} nonlinear violations, using a separation margin strictly larger than $\delta$. Moreover, the $P_A\!\to P_B$ feasibility direction is checked as a forward-map range-coverage property: an $A$-feasible point is a counterexample only if no $B$-feasible point maps sufficiently close to it. This avoids falsely certifying a non-surjective transformation and does not require a globally single-valued inverse for the rejection of a bad map.

Some nonlinear reformulations introduce auxiliary variables, such as slack variables, lifted variables, or projection variables, that do not appear explicitly in the source formulation. \SOVER{} handles such variables through quantified feasibility checks, ensuring that auxiliary degrees of freedom do not create spurious mismatches between the projected feasible regions.

\subsubsection{$\epsilon$-Argmin Equivalence}\label{subsubsec:Epsilon}

For nonlinear objectives, exact order preservation may be too brittle near boundary points or flat regions. We therefore use an $\epsilon$-argmin relaxation. Instead of requiring exact equality of minimizer sets, the nonlinear verifier checks whether exact minimizers of one formulation map into the $\epsilon$-optimal region of the other formulation, and vice versa.

For a minimization problem $P$ with feasible set $\mathcal{X}$ and objective $O$, define
\begin{equation*}
    \Omega_\epsilon(P)
    \!=\!
    \left\{
    \bx\!\in\!\mathcal{X}
    \;\middle|\;
    O(\bx) \leq \inf_{\by\in\mathcal{X}} O(\by) + \epsilon
    \right\}\!.
\end{equation*}

\begin{proposition}[$\epsilon$-Argmin Preservation under $\delta$-Complete Verification]
\label{prop:dreal}
{
Let $P_A$ and $P_B$ be nonlinear continuous minimization problems whose global minima are attained, with feasible sets $\mathcal{X}_A$ and $\mathcal{X}_B$. Let $\Sigma:\mathcal{X}_B\!\to\!\mathcal{X}_A$ be a bijective coordinate transformation on the feasible sets. Suppose the two feasibility-mismatch queries are \texttt{UNSAT}, and \dReal{} also returns \texttt{UNSAT} for both margin-separated optimization-mismatch formulas asserting that an exact minimizer of one problem is mapped to a point that can be improved by more than $\epsilon$ in the other problem. Then
\begin{equation*}
\Sigma(\Omega_0(P_B))\!\subseteq\!\Omega_\epsilon(P_A),
    \
    \Sigma^{-1}(\Omega_0(P_A))\!\subseteq\!\Omega_\epsilon(P_B).
\end{equation*}
Here \texttt{UNSAT} is an exact certificate for the original mismatch formula; a $\delta$-\texttt{SAT} result is not used as an equivalence certificate.
}
\end{proposition}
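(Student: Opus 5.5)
The argument is a direct contrapositive reading of the two \texttt{UNSAT} certificates, plus the bijectivity of $\Sigma$. First I will write down explicitly the two optimization-mismatch formulas that \dReal{} is asked to refute. For the direction $P_B\to P_A$ the formula is
\[
\Phi_{B\to A}:\ \exists\, \by,\bx'\ \ \by\in\Omega_0(P_B)\land \bx'\in\mathcal{X}_A \land O_A(\bx') < O_A(\Sigma(\by)) - \epsilon .
\]
In words, it says that an exact $B$-minimizer is mapped to a point that some $A$-feasible point beats by more than $\epsilon$. The formula $\Phi_{A\to B}$ is symmetric, using $\Sigma^{-1}$ and $O_B$. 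In the encoding, ``$\by$ is an exact minimizer'' is expressed with a universally quantified condition $\forall \bz\in\mathcal{X}_B,\ O_B(\by)\le O_B(\bz)$, or equivalently via the attained minimum value. Since the global minima are assumed attained, $\Omega_0(P_B)$ is exactly this set. I will also record that a \texttt{UNSAT} answer from a $\delta$-complete procedure means the $\delta$-weakening is unsatisfiable. Weakening only enlarges the solution set, so the original formula is unsatisfiable as well. This is the sense in which \texttt{UNSAT} is an exact certificate, and $\delta$-\texttt{SAT} never enters the argument.

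Next I prove $\Sigma(\Omega_0(P_B))\subseteq\Omega_\epsilon(P_A)$. Take $\by\in\Omega_0(P_B)$ and set $\bx=\Sigma(\by)$. Because $\Sigma$ maps $\mathcal{X}_B$ into $\mathcal{X}_A$, and the feasibility-mismatch queries are \texttt{UNSAT}, the point $\bx$ is $A$-feasible. Since $\Phi_{B\to A}$ is unsatisfiable, every $\bx'\in\mathcal{X}_A$ satisfies $O_A(\bx')\ge O_A(\bx)-\epsilon$. Taking the infimum over $\bx'$ gives $O_A(\bx)\le \inf_{\mathcal{X}_A}O_A+\epsilon$, which means $\bx\in\Omega_\epsilon(P_A)$. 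The reverse inclusion $\Sigma^{-1}(\Omega_0(P_A))\subseteq\Omega_\epsilon(P_B)$ follows the same way from $\Phi_{A\to B}$. Here bijectivity on the feasible sets guarantees that $\Sigma^{-1}$ is well defined on $\mathcal{X}_A$ and lands in $\mathcal{X}_B$, so $\Sigma^{-1}(\bx)$ is $B$-feasible.

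The main obstacle is reconciling the \emph{margin-separated} form of the encoded formula with the clean strict inequality above. The solver actually refutes a formula with margin $\epsilon$ (chosen larger than $\delta$), possibly with the minimizer condition stated up to tolerance. I need the refuted formula to be at least as weak as $\Phi_{B\to A}$: any witness of $\Phi_{B\to A}$ must also witness the encoded formula. I would check this by showing that the encoded minimizer predicate is implied by exact minimality, and that the encoded improvement condition is implied by improvement of more than $\epsilon$. If the margin used in the encoding is some $\epsilon'\le\epsilon$, the conclusion only gets stronger. If the encoding instead relaxes minimality to $\eta$-optimality, exact minimizers still satisfy it. In both cases unsatisfiability transfers to $\Phi_{B\to A}$, and the argument above goes through.
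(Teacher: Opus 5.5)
Your proof is correct and is essentially the paper's argument in contrapositive form: the paper assumes a $B$-minimizer maps outside $\Omega_\epsilon(P_A)$, uses attainment to pick $\by^\star\in\Omega_0(P_A)$ as a witness of the mismatch formula, and contradicts soundness of \texttt{UNSAT}. Your direct infimum step does not need attainment for $P_A$, and your check that every exact-mismatch witness also witnesses the encoded margin-separated formula makes explicit a step the paper leaves implicit. One minor correction: a $\delta$-complete \texttt{UNSAT} certifies that the original formula is unsatisfiable, not its $\delta$-weakening, since in the overlap case either answer may be returned; but the former is all your argument uses, so nothing breaks.
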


\begin{proof}
    See Appendix~\ref{prop-proof:dReal} for detailed proof.
\end{proof}

\subsection{Implementation Details}\label{subsec:Implementation}

The implementation operates in four stages.\vspace{-.5em}
\begin{enumerate}[leftmargin=*]
    \item \textbf{Parsing and bounded unrolling.}
    The input formulations are programmatic \texttt{gurobipy} strings. \SOVER{} removes solver API syntax, such as \texttt{addConstr} and \texttt{setObjective}, and extracts the underlying algebraic constraints and objective expressions. For benchmark evaluation, indexed variables and matrix expressions are unrolled to a fixed proxy dimension. This certifies the resulting instantiated model; dimension-parametric verification would require an additional induction or quantified-array argument.\vspace{-.75em}
    \item \textbf{Symbol declaration and substitution.}
    Extracted variables and parameters are declared in the SMT environment using their native sorts, including Boolean, integer, and real symbols. The verified variable and parameter mappings are then converted into a substitution dictionary that expresses the target formulation in the source coordinate system.\vspace{-.75em}
    \item \textbf{Feasibility verification.}
    The verifier searches for an assignment satisfying $\neg(C_A \leftrightarrow \widetilde{C}_B)$. A satisfying assignment is a concrete feasibility counterexample. If the formula is unsatisfiable, the two formulations define the same feasible region after substitution.\vspace{-.75em}
    \item \textbf{Objective-order verification.}
    The verifier introduces a shadow copy of all source variables to represent a second feasible point. It then searches for two feasible points whose objective order differs across the two formulations. Unsatisfiability of this query certifies global weak-order preservation, and therefore $\arg\min$ equivalence by Proposition~\ref{prop:z3}.
\end{enumerate}

For each benchmark pair, \SOVER{} records the verification status, generated SMT formulas, solver outputs, counterexamples when available, and diagnostic metadata in a structured \texttt{csv} file.

\section{Experiments and Evaluation}\label{sec:Eval}

We evaluate \SOVER{} on reformulation pairs derived from the EquivaFormulation dataset\footnote{\url{huggingface.co/datasets/humainlab/EquivaFormulation}} introduced by \cite{zhai2025equivamap}. EquivaFormulation modifies problems from the NLP4LP corpus\footnote{\url{huggingface.co/datasets/udell-lab/NLP4LP}} \cite{ahmaditeshnizi2024optimus} using semantic and algebraic transformations, summarized in Table~\ref{tab:variations}. We additionally introduce \textsc{NLEquiv-150}, 150 application-grounded continuous nonlinear pairs: 100 equivalent pairs related by diverse nonlinear coordinate transformations and 50 hard negatives with subtle semantic mismatches. The positives span exponential/log, softplus, reciprocal, square/root, logistic/odds, hyperbolic, shifted-exponential, and cubic transformations; the negatives span 11 objective, constraint, equality/direction/coefficient, and range-mismatch mechanisms. Formulations, mappings, labels, metadata, and code are publicly available at \url{https://github.com/baranwa2/SOVER}. For nonlinear verification, the forward map is primary: \SOVER{} checks mapped feasibility and reverse range coverage using \dReal{} with $\delta=10^{-5}$, separation margin $10^{-3}$, and $\epsilon=10^{-3}$.

\subsection{Data Cleaning}\label{subsec:data-cleaning}

Before evaluation, we clean metadata that would otherwise corrupt SMT declarations: qualitative dimension descriptors (e.g., ``positive,'' ``continuous,'' ``non-negative'') are removed by regex, and empty/non-indexed shapes are treated as scalars unless explicit indexing is present (examples in Appendix, Figure~\ref{fig:irrelevant dimension}). We evaluate nine transformed versions of each original formulation and exclude variation \_e (added valid inequalities), since our focus is structural equivalence rather than instance-specific or solver-dependent behavior.

\subsection{Experimental Setup \& Problem Variations}\label{subsec:setup}

Each benchmark instance pairs an original optimization formulation with one transformed formulation. Table~\ref{tab:results} reports the number of correctly classified instances for each subtype. The expanded evaluation contains $2328$ pairs in Table~\ref{tab:results}: $2178$ EquivaFormulation pairs and $150$ \textsc{NLEquiv-150} pairs. %The transformations include equivalent variants, such as variable renaming, binary substitution, slack-variable introduction, linear substitution, and objective rescaling, as well as non-equivalent variants, such as unrelated formulation replacement and removal of constraints that are inactive at the observed optimum. These categories test whether an equivalence checker can distinguish semantic equivalence from superficial agreement in solver outputs. 

\begin{table*}[htbp]
\centering
\begin{tabular}{lcccccccc}
\hline
\textbf{Subtype} & \textbf{Total} & \textbf{\SOVER} & \textbf{\EM} & \textbf{WLT} & \textbf{Naive} & \textbf{Gemini} & \textbf{Canonical}  \\ 
 & & \textbf{(Ours)} & & & \textbf{Prompt} & \textbf{Aided} & \textbf{Accuracy}  \\ 
\hline
\_c & 243 & \textbf{243} & 234 & 243 & 152 & 226 & 224  \\
\_d & 234 & \textbf{234} & 207 & 63  & 56  & 79  & 58   \\
\_f & 243 & \textbf{243} & 233 & 0   & 84  & 110 & 0    \\
\_g & 243 & \textbf{241} & 234 & 51  & 57  & 60  & 42   \\
\_h & 243 & \textbf{243} & 186 & 51  & 1   & 6   & 0    \\ 
\_i & 243 & \textbf{242} & 166 & 243 & 82  & 157 & 0    \\
\_j & 243 & \textbf{243} & 241 & 232 & 239 & 238 & 243  \\
\_k & 243 & \textbf{243} & 237 & 243 & 243 & 242 & 243  \\
\_l & 243 & \textbf{241} & 197 & 243 & 243 & 240 & 243  \\
\_m$^{+}$ & 100 & \textbf{99} & -- & -- & -- & -- & --  \\
\_m$^{-}$ & 50 & \textbf{50} & \-- & -- & -- & -- & --  \\
\hline
\end{tabular}
\caption{Comparative performance across problem-transformation subtypes. Entries report the number of correctly classified instances. \textsc{NLEquiv-150} comprises $\_m^{+}$ (100 equivalent) and $\_m^{-}$ (50 hard non-equivalent) pairs. Counts are end-to-end with LLM-extracted mappings; the sole $\_m^{+}$ miss is an incomplete map. Baselines target the original EquivaFormulation setting and are omitted for these new pairs.}
\label{tab:results}
\end{table*}

\subsection{Baselines}\label{subsec:baselines}

We compare against five baselines:\vspace{-.5em}
\begin{itemize}[leftmargin=*]
    \item \textbf{\EM{} \cite{zhai2025equivamap}:} LLM-derived variable mappings followed by feasibility/optimality checks.\vspace{-1em}
    \item \textbf{Naive LLM Prompting:} Zero-shot binary equivalence classification \cite{zhai2025equivamap} without explicit mappings or solvers.\vspace{-1em}
    \item \textbf{Gemini-CoT:} Chain-of-Thought classification \cite{team2023gemini,wei2022chain} with intermediate structural reasoning (prompt in Appendix, Figure~\ref{fig:prompt-gemini}).\vspace{-1em}
    \item \textbf{Weisfeiler-Lehman Graph Test (WLT) \cite{douglas2011weisfeiler}:} Bipartite formulation graphs compared by iterative color refinement.\vspace{-1em}
    \item \textbf{Canonical Accuracy:} Exact character-level declaration matching.
\end{itemize}

\subsection{Results and Analysis}\label{subsec:results-analysis}

On EquivaFormulation, \SOVER{} correctly classifies $2173/2178$ MILP pairs ($99.77\%$): $1446/1449$ equivalent and $727/729$ non-equivalent. It obtains $242/243$ on objective rescaling (\_i), where canonical matching obtains $0/243$, and perfect accuracy on objective-to-constraint (\_f) and linear-substitution (\_h) variants. For the challenging non-equivalent \_l subtype, where inactive constraints are removed without necessarily changing an observed optimum, explicit feasibility checking yields $241/243$. These results show the benefit of feasible-region and objective-order reasoning over surface or single-optimum agreement.

Our reproduction of \EM{} with its released code and multiple frontier LLM backends obtains $1935/2178$ ($88.84\%$), including $186/243$ on \_h and $166/243$ on \_i. The observed errors are concentrated in cases requiring precise mappings: inaccurate, incomplete, or non-unique single-shot mappings propagate to the final decision, whereas \SOVER{} formally checks the proposed alignment through SMT feasibility and ordering obligations.
\begin{table*}[htbp]
\centering
\small
\resizebox{0.9\textwidth}{!}{%
\begin{tabular}{lcccccc}
\hline
\textbf{Metric} & \textbf{SOVER} & \textbf{EquivaMap} & \textbf{WLT} &
\textbf{Naive} & \textbf{Gemini} & \textbf{Canonical} \\
& \textbf{(End-to-End)} & & & \textbf{Prompt} &
\textbf{Aided} & \textbf{Accuracy} \\
\hline
\textbf{End-to-end time (s)} & 7.07 & 2.60 & 1.01 & 1.17 & 2.18 & 7.04 \\
\textbf{Formal verifier only (s)} & 0.03 (Z3) & -- & -- & -- & -- & -- \\
\hline
\end{tabular}%
}
\caption{Average approximate evaluation time per instance. For \SOVER{}, 7.07\,s is the complete mapping-plus-verification pipeline, whereas the \zt{} verification stage alone averages 0.03\,s.}
\label{tab:time_results}
\end{table*}
In order to prove robustness of our framework, we have also evaluated \SOVER{} directly on \textbf{FormulationBench} \cite{robbins2026flare}, beyond the EquivaFormulation data used in the original submission. We conducted \textbf{116 formulation-level equivalence tests spanning all 20 base problems}: 5 from EquivaFormulation \cite{zhai2025equivamap}, 7 from EvoCut\cite{yazdani2026evocutstrengtheningintegerprograms}, and 8 from \cite{ferchtandiker2025finding}. \SOVER{} correctly classifies \textbf{111/116 cases (95.69\%)}. The latest version of \textbf{FormulationBench} contains 109 formulations, but our experimentation is based on the version as of 12 July 2026. The detailed results are highlighted in Table~\ref{tab:formulationbench_results}.

\begin{table}[!htbp]
\centering

\begin{tabular}{lccc}
\toprule
\textbf{Variation} & \textbf{Total} & \textbf{Mismatches} & \textbf{Correct} \\
\midrule
\texttt{\_a} & 20 & 0 & 20 \\
\texttt{\_b} & 20 & 1 & 19 \\
\texttt{\_c} & 12 & 1 & 11 \\
\texttt{\_d} & 12 & 1 & 11 \\
\texttt{\_e} & 10 & 1 & 9 \\
\texttt{\_f} & 9  & 1 & 8 \\
\texttt{\_g} & 9  & 0 & 9 \\
\texttt{\_h} & 9  & 0 & 9 \\
\texttt{\_i} & 9  & 0 & 9 \\
\texttt{\_j} & 6  & 0 & 6 \\
\midrule
\textbf{Total} & \textbf{116} & \textbf{5} & \textbf{111} \\
\bottomrule
\end{tabular}
\caption{Performance of \SOVER{} across different formulation variations in FormulationBench.}
\label{tab:formulationbench_results}
\end{table}

\textbf{Nonlinear benchmark analysis.} On \textsc{NLEquiv-150}, the LLM recovers 99/100 intended positive forward maps and \SOVER{} certifies those same 99; the only miss is an incomplete extracted map. \SOVER{} rejects all 50 hard negatives. These include five cases each of eight mismatch types (constraint tightening/addition/omission, functional mismatch, objective linear perturbation, equality mismatch, objective risk-target shift, coefficient mismatch), four direction mismatches, and three each of objective-target and nonlinear-range mismatches. Of three imperfect negative mapping records, one has an incorrect forward map and two have correct non-injective $\tanh^2(\cdot)$ maps with single-branch inverses; forward-map range checking still rejects all three. Overall accuracy is $149/150=99.33\%$ on \textsc{NLEquiv-150} and $2322/2328=99.74\%$ including EquivaFormulation.

\textbf{Runtime decomposition.} We do not claim end-to-end speed superiority: \SOVER{} averages 7.07\,s per EquivaFormulation pair versus 2.60\,s for \EM{}. Yet \zt{} verification itself averages only 0.03\,s (about 0.4\%); almost all remaining time is LLM-assisted mapping/refinement. Thus, robustness is obtained through stricter upstream semantic alignment, while certification is inexpensive: once a candidate map is available, \SOVER{} verifies feasibility/order obligations directly rather than independently solving both formulations and comparing optima. The advantage is therefore robust, low-cost verification—not lower present end-to-end latency.

\section{Conclusion and Future Work}\label{sec:Conclusion}
We presented \SOVER, an LLM-assisted SMT framework for verifying optimization reformulations. Rather than comparing solver outputs empirically, \SOVER{} encodes correctness as logical obligations over symbolic domains. For \zt{} cases, cross-feasibility and objective-order preservation certify exact $\argmin$ equivalence; for nonlinear \dReal{} cases, margin-separated \texttt{UNSAT} queries certify the stated $\epsilon$-argmin guarantee. Experimentation using EquivaFormulation shows that \SOVER{} substantially outperforms prompting, syntactic, graph-based, and mapping-based baselines, particularly in cases where equivalence cannot be inferred from optimal values alone.

Several directions remain open. Improving LLM-based mapping synthesis could reduce inconclusive cases, while extending the verifier beyond bounded instances to dimension-parametric formulations would provide stronger guarantees for problem families. Although the \dReal{} extension supports nonlinear continuous reformulations via $\delta$-satisfiability, tighter tolerance-aware certificates for non-convex models remain important. Finally, integrating \SOVER{} into automated modeling pipelines could provide end-to-end safeguards for LLM-generated formulations before deployment in high-stakes settings.

% We presented \SOVER, an LLM-assisted SMT-based framework for verifying optimization reformulations. Instead of relying on empirical agreement between solver outputs, \SOVER{} encodes reformulation correctness as logical obligations over symbolic domains. By checking domain cross-feasibility and global objective-order preservation, the framework verifies $\argmin$ equivalence while avoiding common failure modes from inactive constraints, scaled objectives, symbolic parameters, and heterogeneous representations. Our experiments on EquivaFormulation demonstrate that \SOVER{} is significantly more robust than prompting, syntactic, graph, and mapping baselines, especially where equivalence cannot be determined from optimal objective values alone.

% Several directions remain open. First, while the final equivalence decision is SMT-verified, improving the underlying LLM mapping synthesis would reduce inconclusive cases. Second, extending the verifier from bounded benchmark instances to dimension-parametric formulations would enable stronger guarantees for problem families. Third, while the \dReal{} extension supports nonlinear continuous reformulations via $\delta$-satisfiability, tighter tolerance-aware certificates for non-convex models remain important. Finally, integrating \SOVER{} into automated modeling pipelines could provide end-to-end safeguards for LLM-generated formulations before deployment in high-stakes systems.

\clearpage

\section*{Limitations}
Although our SMT-driven framework demonstrates high reliability in verifying structural equivalence, it is subject to several operational limitations. Primarily, the system is bottlenecked by its dependence on the upstream LLM mapping phase; because the deterministic solver relies on these generated alignments, any failure by the language model to extract a precise, unique mapping—particularly in highly convoluted or obfuscated reformulations—directly results in a verification failure. Secondly, relying on an SMT solver like \zt{} introduces inherent scalability challenges. While highly effective for standard formulations, symbolic verification can experience exponential computational overhead when applied to massive, industrial-scale models with thousands of integer variables and constraints. Finally, as observed in specific edge cases involving slack variables, our current feasibility parser relies on a fixed unrolling proxy depth. This architectural choice occasionally restricts the engine's ability to resolve deeply nested inequalities or complex nonlinear bounds, presenting a potential area for refinement in future iterations of the pipeline. 
\section*{Ethical Considerations}
The deployment of automated verification frameworks for optimization models carries several ethical implications. A primary concern is automation bias in high-stakes domains (e.g., healthcare logistics or resource allocation); practitioners might over-rely on automated approvals, potentially leading to real-world mismanagement if a flawed model bypasses detection due to upstream extraction errors. Additionally, while the SMT solver provides deterministic proofs, the LLM-based mapping phase remains an opaque "black box," complicating the full auditability of the verification pipeline. Finally, the environmental and computational costs of querying large language models in tandem with resource-intensive symbolic solvers must be carefully weighed in future deployments.
\bibliography{custom}

\begin{thebibliography}{38}
\providecommand{\natexlab}[1]{#1}

\bibitem[{Aaronson(2005)}]{aaronson2005guest}
Scott Aaronson. 2005.
\newblock {NP-Complete Problems and Physical Reality}.
\newblock \emph{ACM Sigact News}, 36(1):30--52.

\bibitem[{AhmadiTeshnizi et~al.(2024)AhmadiTeshnizi, Gao, and Udell}]{ahmaditeshnizi2024optimus}
Ali AhmadiTeshnizi, Wenzhi Gao, and Madeleine Udell. 2024.
\newblock {Optimus: Scalable Optimization Modeling with (mi) LP Solvers and Large Language Models}.
\newblock \emph{arXiv preprint arXiv:2402.10172}.

\bibitem[{Brown et~al.(2020)Brown, Mann, Ryder, Subbiah, Kaplan, Dhariwal, Neelakantan, Shyam, Sastry, Askell et~al.}]{brown2020language}
Tom Brown, Benjamin Mann, Nick Ryder, Melanie Subbiah, Jared~D Kaplan, Prafulla Dhariwal, Arvind Neelakantan, Pranav Shyam, Girish Sastry, Amanda Askell, and 1 others. 2020.
\newblock {Language Models are Few-shot Learners}.
\newblock \emph{Advances in neural information processing systems}, 33:1877--1901.

\bibitem[{Chen et~al.(2024)Chen, Constante-Flores, and Li}]{chen2024diagnosing}
Hao Chen, Gonzalo~E Constante-Flores, and Can Li. 2024.
\newblock {Diagnosing Infeasible Optimization Problems using Large Language Models}.
\newblock \emph{INFOR: Information Systems and Operational Research}, 62(4):573--587.

\bibitem[{Chen et~al.(2021)Chen, Tworek, Jun, Yuan, Pinto, Kaplan, Edwards, Burda, Joseph, Brockman et~al.}]{chen2021evaluating}
Mark Chen, Jerry Tworek, Heewoo Jun, Qiming Yuan, Henrique Ponde De~Oliveira Pinto, Jared Kaplan, Harri Edwards, Yuri Burda, Nicholas Joseph, Greg Brockman, and 1 others. 2021.
\newblock {Evaluating Large Language Models Trained on Code}.
\newblock \emph{arXiv preprint arXiv:2107.03374}.

\bibitem[{Chen et~al.(2026)Chen, Xia, Shao, Ge, and Ye}]{chen2026solver}
Yitian Chen, Jingfan Xia, Siyu Shao, Dongdong Ge, and Yinyu Ye. 2026.
\newblock {Solver-informed RL: Grounding Large Language Models for Authentic Optimization Modeling}.
\newblock \emph{Advances in Neural Information Processing Systems}, 38:106027--106069.

\bibitem[{Christofides et~al.(1981)Christofides, Mingozzi, and Toth}]{DBLP:journals/mp/ChristofidesMT81}
Nicos Christofides, Aristide Mingozzi, and Paolo Toth. 1981.
\newblock \href {https://doi.org/10.1007/BF01589353} {{Exact Algorithms for the Vehicle Routing Problem, based on Spanning Tree and Shortest Path Relaxations}}.
\newblock \emph{Math. Program.}, 20(1):255--282.

\bibitem[{Douglas(2011)}]{douglas2011weisfeiler}
Brendan~L Douglas. 2011.
\newblock {The Weisfeiler-Lehman Method and Graph Isomorphism Testing}.
\newblock \emph{arXiv preprint arXiv:1101.5211}.

\bibitem[{Ferchtandiker et~al.(2025)Ferchtandiker, den Hertog, Udell, and Wasserkrug}]{ferchtandiker2025finding}
Nathan Ferchtandiker, Dick den Hertog, Madeleine Udell, and Segev Wasserkrug. 2025.
\newblock \href {https://github.com/nathan-ferchtandiker/LLMs-For-Optimization-Reformulations} {Finding efficient milo formulations with llms}.
\newblock Working paper.

\bibitem[{Grattafiori et~al.(2024)Grattafiori, Dubey, Jauhri, Pandey, Kadian, Al-Dahle, Letman, Mathur, Schelten, Vaughan et~al.}]{grattafiori2024llama}
Aaron Grattafiori, Abhimanyu Dubey, Abhinav Jauhri, Abhinav Pandey, Abhishek Kadian, Ahmad Al-Dahle, Aiesha Letman, Akhil Mathur, Alan Schelten, Alex Vaughan, and 1 others. 2024.
\newblock {The Llama 3 Herd of Models}.
\newblock \emph{arXiv preprint arXiv:2407.21783}.

\bibitem[{Herrera-Poyatos et~al.(2025)Herrera-Poyatos, Pel{\'a}ez-Gonz{\'a}lez, Zuheros, Herrera-Poyatos, Tejedor, Herrera, and Montes}]{herrera2025overview}
David Herrera-Poyatos, Carlos Pel{\'a}ez-Gonz{\'a}lez, Cristina Zuheros, Andr{\'e}s Herrera-Poyatos, Virilo Tejedor, Francisco Herrera, and Rosana Montes. 2025.
\newblock {An Overview of Model Uncertainty and Variability in LLM-based Sentiment Analysis: Challenges, Mitigation Strategies, and the Role of Explainability}.
\newblock \emph{Frontiers in Artificial Intelligence}, 8:1609097.

\bibitem[{Huang et~al.(2025)Huang, Shen, Hu, Gao, and Wang}]{huang2025llms}
Xuhan Huang, Qingning Shen, Yan Hu, Anningzhe Gao, and Benyou Wang. 2025.
\newblock {LLMs for Mathematical Modeling: Towards Bridging the Gap Between Natural and Mathematical Languages}.
\newblock In \emph{Findings of the Association for Computational Linguistics: NAACL 2025}, pages 2678--2710.

\bibitem[{Karp(2009)}]{karp2009reducibility}
Richard~M Karp. 2009.
\newblock {Reducibility Among Combinatorial Problems}.
\newblock In \emph{50 Years of Integer Programming 1958-2008: from the Early Years to the State-of-the-Art}, pages 219--241. Springer.

\bibitem[{Le(2024)}]{le2024survey}
Phuong Le. 2024.
\newblock {A Survey on Combinatorial Optimization}.
\newblock \emph{arXiv preprint arXiv:2409.00075}.

\bibitem[{Li et~al.(2025{\natexlab{a}})Li, Kulkarni, Menache, Wu, and Li}]{li2025towards}
Sirui Li, Janardhan Kulkarni, Ishai Menache, Cathy Wu, and Beibin Li. 2025{\natexlab{a}}.
\newblock {Towards Foundation Models for Mixed Integer Linear Programming}.
\newblock In \emph{International Conference on Learning Representations}, volume 2025, pages 88590--88638.

\bibitem[{Li et~al.(2025{\natexlab{b}})Li, Jin, Hong, Lu, and Wang}]{li2025optimization}
Wenhao Li, Bo~Jin, Mingyi Hong, Changhong Lu, and Xiangfeng Wang. 2025{\natexlab{b}}.
\newblock {Optimization Problem Solving Can Transition to Evolutionary Agentic Workflows}.
\newblock \emph{arXiv preprint arXiv:2505.04354}.

\bibitem[{Liu et~al.(2022)Liu, Lu, Abbasi, Li, Mohammadi, and Kolouri}]{liu2022teachingnetworkssolveoptimization}
Xinran Liu, Yuzhe Lu, Ali Abbasi, Meiyi Li, Javad Mohammadi, and Soheil Kolouri. 2022.
\newblock \href {https://arxiv.org/abs/2202.04104} {{Teaching Networks to Solve Optimization Problems}}.
\newblock \emph{Preprint}, arXiv:2202.04104.

\bibitem[{Ma et~al.(2026)Ma, Dai, Yuan, Li, Luo, Wang, Liu, Sha, and Sui}]{ma2026large}
Jingyuan Ma, Damai Dai, Zihang Yuan, Rui Li, Weilin Luo, Bin Wang, Qun Liu, Lei Sha, and Zhifang Sui. 2026.
\newblock {Large Language Models Struggle with Unreasonability in Math Problems}.
\newblock In \emph{Proceedings of the AAAI Conference on Artificial Intelligence}, volume~40, pages 32428--32436.

\bibitem[{Mahmoud et~al.(2024)Mahmoud, Mohammed, Ayman, Medhat, Selim, Zayed, Yousef, and Elaraby}]{mahmoud2024formal}
Amira~T Mahmoud, Ahmad~A Mohammed, Mahitap Ayman, Walaa Medhat, Sahar Selim, Hala Zayed, Ahmed~H Yousef, and Nahla Elaraby. 2024.
\newblock {Formal Verification of Code Conversion: A Comprehensive Survey}.
\newblock \emph{Technologies}, 12(12):244.

\bibitem[{Paulson(2006)}]{paulson2006defining}
Lawrence~C Paulson. 2006.
\newblock {Defining Functions on Equivalence Classes}.
\newblock \emph{ACM Transactions on Computational Logic (TOCL)}, 7(4):658--675.

\bibitem[{Pei et~al.(2025)Pei, Du, and Jin}]{pei-etal-2025-fover}
Yu~Pei, Yongping Du, and Xingnan Jin. 2025.
\newblock \href {https://doi.org/10.1162/tacl.a.41} {{ "{F}o{V}er: First-Order Logic Verification for Natural Language Reasoning"}}.
\newblock \emph{Transactions of the Association for Computational Linguistics}, 13:1340--1359.

\bibitem[{Pop et~al.(2024)Pop, Cosma, Sabo, and Sitar}]{pop2024comprehensive}
Petric{\u{a}}~C Pop, Ovidiu Cosma, Cosmin Sabo, and Corina~Pop Sitar. 2024.
\newblock A comprehensive survey on the generalized traveling salesman problem.
\newblock \emph{European Journal of Operational Research}, 314(3):819--835.

\bibitem[{Qin et~al.(2024)Qin, Liang, Ye, Zhu, Yan, Lu, Lin, Cong, Tang, Qian et~al.}]{qin2024toolllm}
Yujia Qin, Shihao Liang, Yining Ye, Kunlun Zhu, Lan Yan, Yaxi Lu, Yankai Lin, Xin Cong, Xiangru Tang, Bill Qian, and 1 others. 2024.
\newblock {Toolllm: Facilitating Large Language Models to Master 16000+ Real-World Apis}.
\newblock In \emph{International Conference on Learning Representations}, volume 2024, pages 9695--9717.

\bibitem[{Robbins et~al.(2026)Robbins, Lawless, Udell, and Vitercik}]{robbins2026flare}
Henry Robbins, Connor Lawless, Madeleine Udell, and Ellen Vitercik. 2026.
\newblock \href {https://flare.henryrobbins.com} {Flare: Verifying milp reformulations with llm-based theorem proving}.
\newblock Working paper.

\bibitem[{Shi et~al.(2024)Shi, Shen, Huang, Li, Leng, Jin, Liu, Wu, Guo, Yu et~al.}]{shi2024large}
Dan Shi, Tianhao Shen, Yufei Huang, Zhigen Li, Yongqi Leng, Renren Jin, Chuang Liu, Xinwei Wu, Zishan Guo, Linhao Yu, and 1 others. 2024.
\newblock {Large Language Model Safety: A Holistic Survey}.
\newblock \emph{arXiv preprint arXiv:2412.17686}.

\bibitem[{Singh(2012)}]{singh2012overview}
Ajay Singh. 2012.
\newblock {An Overview of the Optimization Modelling Applications}.
\newblock \emph{Journal of Hydrology}, 466:167--182.

\bibitem[{Sun et~al.(2019)Sun, Cao, Zhu, and Zhao}]{sun2019survey}
Shiliang Sun, Zehui Cao, Han Zhu, and Jing Zhao. 2019.
\newblock {A Survey of Optimization Methods from a Machine Learning Perspective}.
\newblock \emph{IEEE Transactions on Cybernetics}, 50(8):3668--3681.

\bibitem[{Team et~al.(2023)Team, Anil, Borgeaud, Alayrac, Yu, Soricut, Schalkwyk, Dai, Hauth, Millican et~al.}]{team2023gemini}
Gemini Team, Rohan Anil, Sebastian Borgeaud, Jean-Baptiste Alayrac, Jiahui Yu, Radu Soricut, Johan Schalkwyk, Andrew~M Dai, Anja Hauth, Katie Millican, and 1 others. 2023.
\newblock {Gemini: A Family of Highly Capable Multimodal Models}.
\newblock \emph{arXiv preprint arXiv:2312.11805}.

\bibitem[{Wang and Li(2025)}]{wang2025large}
Yang Wang and Kai Li. 2025.
\newblock {Large Language Models in Operations Research: Methods, Applications, and Challenges}.
\newblock \emph{arXiv preprint arXiv:2509.18180}.

\bibitem[{Wei et~al.(2022)Wei, Wang, Schuurmans, Bosma, Xia, Chi, Le, Zhou et~al.}]{wei2022chain}
Jason Wei, Xuezhi Wang, Dale Schuurmans, Maarten Bosma, Fei Xia, Ed~Chi, Quoc~V Le, Denny Zhou, and 1 others. 2022.
\newblock {Chain-Of-Thought Prompting Elicits Reasoning in Large Language Models}.
\newblock \emph{Advances in neural information processing systems}, 35:24824--24837.

\bibitem[{Xiao et~al.(2025)Xiao, Xie, Xu, Guan, Zhu, Han, Fu, Yu, Wu, Shi et~al.}]{xiao2025survey}
Ziyang Xiao, Jingrong Xie, Lilin Xu, Shisi Guan, Jingyan Zhu, Xiongwei Han, Xiaojin Fu, WingYin Yu, Han Wu, Wei Shi, and 1 others. 2025.
\newblock {A Survey of Optimization Modeling meets LLMs: Progress and Future Directions}.
\newblock \emph{arXiv preprint arXiv:2508.10047}.

\bibitem[{Yang et~al.(2024)Yang, Wang, Lu, Liu, Le, Zhou, and Chen}]{yang2024large}
Chengrun Yang, Xuezhi Wang, Yifeng Lu, Hanxiao Liu, Quoc~V Le, Denny Zhou, and Xinyun Chen. 2024.
\newblock {Large Language Models as Optimizers}.
\newblock In \emph{International Conference on Learning Representations}, volume 2024, pages 12028--12068.

\bibitem[{Yao et~al.(2025)Yao, Sun, and Xue}]{yao2025fact}
Jiayi Yao, Haibo Sun, and Nianwen Xue. 2025.
\newblock {Fact-checking AI-generated News Reports: Can LLMs Catch Their Own Lies?}
\newblock \emph{arXiv preprint arXiv:2503.18293}.

\bibitem[{Yazdani et~al.(2026)Yazdani, Mostajabdaveh, Aref, and Zhou}]{yazdani2026evocutstrengtheningintegerprograms}
Milad Yazdani, Mahdi Mostajabdaveh, Samin Aref, and Zirui Zhou. 2026.
\newblock \href {https://arxiv.org/abs/2508.11850} {Evocut: Strengthening integer programs via evolution-guided language models}.
\newblock \emph{Preprint}, arXiv:2508.11850.

\bibitem[{Zhai et~al.(2025)Zhai, Lawless, Vitercik, and Leqi}]{zhai2025equivamap}
Haotian Zhai, Connor Lawless, Ellen Vitercik, and Liu Leqi. 2025.
\newblock {EquivaMap: Leveraging LLMs for Automatic Equivalence Checking of Optimization Formulations}.
\newblock \emph{arXiv preprint arXiv:2502.14760}.

\bibitem[{Zhang et~al.(2025)Zhang, Cheng, Yi, and Tan}]{zhang2025systematic}
Yisong Zhang, Ran Cheng, Guoxing Yi, and Kay~Chen Tan. 2025.
\newblock {A Systematic Survey on Large Language Models for Evolutionary Optimization: From Modeling to Solving}.
\newblock \emph{arXiv preprint arXiv:2509.08269}.

\bibitem[{Zhou et~al.(2025)Zhou, Xu, Lin, and Ge}]{zhou2025steporlm}
Chenyu Zhou, Tianyi Xu, Jianghao Lin, and Dongdong Ge. 2025.
\newblock {Steporlm: A Self-evolving Framework with Generative Process Supervision for Operations Research Language Models}.
\newblock \emph{arXiv preprint arXiv:2509.22558}.

\bibitem[{Zhou and Zhang(2025)}]{zhou2025step}
Kuo Zhou and Lu~Zhang. 2025.
\newblock {Step-wise Formal Verification for LLM-based Mathematical Problem Solving}.
\newblock \emph{arXiv preprint arXiv:2505.20869}.

\end{thebibliography}

\clearpage
\appendix

\section{Appendix}
\label{sec:appendix}

\section*{GenAI Usage Disclosure}
We used OpenAI's ChatGPT only to assist with limited language refinement of the manuscript. We did not use GenAI tools for problem formulation, algorithm design, theoretical analysis, proof development, code generation, data collection, data preprocessing, experimental design, result generation, result analysis, or scientific interpretation. All technical content, experimental results, scientific claims, and conclusions are entirely the authors' own and were verified by the authors.

\subsection{Inference Hyperparameters and Evaluation Libraries}
%\begin{description}
    \textbf{LLM Generation Pipeline:}
    \begin{itemize}
        \item \textbf{Orchestration Framework:} All generation queries are orchestrated via the \texttt{liteLLM} abstraction framework to interface uniformly with the GPT-5.4-mini deployment endpoint.
        \item \textbf{Hyperparameter Configuration:} To ensure deterministic, reproducible structural alignments, the decoding temperature is strictly set to $\tau = 0.0$. The \texttt{top\_p} parameter is fixed at $1.0$, while both \texttt{presence\_penalty} and \texttt{frequency\_penalty} are maintained at $0.0$ to eliminate stochastic variance in structured JSON production.
        \item \textbf{Prompt-Level Context Rules:} Generative behavior is bounded by a strict structural bijection constraint, preventing the mapping of multiple distinct Problem~1 source variables to a singular Problem~2 target variable. Furthermore, a ``scalar-by-default'' logic is mandated, requiring explicit array indexing evidence before declaring any variable a vector.
    \end{itemize}
    
    \noindent\textbf{Formal Verification Engine:}
    \begin{itemize}
        \item \textbf{Model Parsing:} The \texttt{gurobipy} API is utilized to extract and sanitize the underlying mixed-integer and linear programming models, translating constraint and objective functions into programmatic Python representations.
        \item \textbf{Symbolic Evaluation:} Core formal equivalence proofs are evaluated using the \zt{} Theorem Prover via the \texttt{\zt-solver} Python library, serving as the deterministic Satisfiability Modulo Theories (SMT) backend.
        \item \textbf{Structural Heuristics:} To avoid proof corruption from superficial encoding differences, slack and auxiliary variables are automatically isolated using regular expressions. These are parameterized as existentially quantified entities ($\exists\,\text{slack}$), preventing them from disrupting the primary structural identity check.
        \item \textbf{Computational Bounding:} To manage undecidability boundaries and combinatorial explosion during SMT checking, each \zt{} instance is constrained by a strict execution timeout of $10,000$\,ms. This guarantees clean classification into empirical results (\textit{Pass}, \textit{Feasibility Mismatch}, \textit{Objective Mismatch}, or \textit{Runtime Error}) without unbounded hanging.
    \end{itemize}
% \end{description}

\subsection{Proof of Proposition~\ref{prop:z3}}\label{prop-proof:z3}
\begin{proof}
    By Eq.~\eqref{eq:feas-equiv}, both formulations define the same feasible set after substitution. Let this common feasible set be $\mathcal{X}$.
    
    We first show that every minimizer of $P_A$ is also a minimizer of the mapped target problem. Let $\bx^\star \in \arg\min_{\bx\in\mathcal{X}} O_A(\bx)$. Then, for every $\bx\in\mathcal{X}$,
    \[
        O_A(\bx^\star) \leq O_A(\bx).
    \]
    Applying Eq.~\eqref{eq:order-equiv} with $\bx^\star$ and $\bx$ gives
    \[
        \widetilde{O}_B(\bx^\star) \leq \widetilde{O}_B(\bx),
    \]
    for every $\bx\in\mathcal{X}$. Hence $\bx^\star$ is also a global minimizer of the mapped target problem.
    
    The reverse inclusion follows identically. If $\bx^\star$ minimizes $\widetilde{O}_B$ over $\mathcal{X}$, then Eq.~\eqref{eq:order-equiv} implies that $O_A(\bx^\star)\leq O_A(\bx)$ for every feasible $\bx$. Thus $\bx^\star$ also minimizes $O_A$. Therefore, the two argmin sets are identical.
\end{proof}

\subsection{Proof of Proposition~\ref{prop:dreal}}\label{prop-proof:dReal}
\begin{proof}
{
We prove the first containment; the second follows by applying the same argument to the bijection $\Sigma^{-1}$. Suppose, for contradiction, that there exists $\bx^\star\in\Omega_0(P_B)$ such that $\bx'=\Sigma(\bx^\star)\notin\Omega_\epsilon(P_A)$. Feasibility preservation ensures $\bx'\in\mathcal{X}_A$. By the definition of $\Omega_\epsilon(P_A)$,
\[
    O_A(\bx') >
    \inf_{\by\in\mathcal{X}_A} O_A(\by)+\epsilon .
\]
Because the global minimum of $P_A$ is attained, choose
$\by^\star\in\Omega_0(P_A)$. Then
\[
    O_A(\by^\star)
    =
    \inf_{\by\in\mathcal{X}_A}O_A(\by)
    <
    O_A(\bx')-\epsilon .
\]
Consequently, $\bx^\star$ is an exact minimizer of $P_B$ whose mapped point in $P_A$ admits a feasible competitor improving the objective by more than $\epsilon$. Hence the corresponding optimization-mismatch formula is satisfiable. A sound \dReal{} call therefore cannot return \texttt{UNSAT} for that formula, contradicting the hypothesis. Thus
$\Sigma(\Omega_0(P_B))\subseteq\Omega_\epsilon(P_A)$.
The reverse containment follows symmetrically using $\Sigma^{-1}$.

The argument relies only on the soundness of the \texttt{UNSAT} answer for the original mismatch formula. A $\delta$-\texttt{SAT} answer, which may arise from the $\delta$-weakening near a shared boundary, is intentionally not used to certify equivalence.
}
\end{proof}

\subsection{Prompts}
\label{prompts}
The secondary prompts for variable and parameter mapping generation, prompt for non-linear mapping generation, the Gemini-aided Chain-of-Thought (CoT) prompt for generation, and examples of dataset instances with dimensional issues are provided in the subsequent parts of this appendix. The complete \textsc{NLEquiv-150} release---including source/reformulated nonlinear formulations, ground-truth and LLM-extracted mappings, labels, mismatch metadata, and verification code---is available at \url{https://github.com/baranwa2/SOVER}.

\onecolumn
\begin{figure*}
\centering
\tcbset{}
\begin{tcolorbox}[width=\textwidth]
\begin{Verbatim}
You are an expert AI assistant mapping decision variables between two 
optimization problems. Your goal is to find the equivalent variables in 
Problem 2 for EVERY variable in Problem 1.
**Problem 1 (Reference):** -Variables: {json.dumps(vars1, indent=2)}
-Constraints:{json.dumps([c.get('formulation') for c in constraints1],indent=2)}
-Objective: {json.dumps(objective1.get('formulation',''),indent=2)}
**Problem 2 (Target):** -Variables:{json.dumps(vars2,indent=2)}
-Constraints:{json.dumps([c.get('formulation') for c in constraints2],indent=2)}
- Objective: {json.dumps(objective2.get('formulation', ''), indent=2)}
**Previous Incorrect Mapping (TO BE CORRECTED):** {json.dumps(old_mappings,
indent=2)}
**STRICT RULES FOR VARIABLE MAPPING:** 1. **MANDATORY CORRECTION (CRITICAL):**
   - The "Previous Incorrect Mapping" provided above is MATHEMATICALLY INVALID 
   and FAILED equivalence tests. 
   - You are strictly forbidden from returning the exact same mapping. You MUST 
   identify the specific error internally (e.g., mismatched variable, inverted
   coefficient like using 10 instead of 0.1) and change it in your final output. 
   - If you return the `old_mappings` unchanged, you have failed your core
   directive.
2. **CORE MAPPING LOGIC:**
   - Every Problem 1 variable MUST be mapped. Do NOT map to "none".
   - UNIQUE MAPPING: Do not map multiple different Problem 1 variables to the 
   exact same Problem 2 variable (e.g., if x maps to x1 and x2, y must map to
   different variables, not x1 and x2).
   - ONE-TO-MANY: A single Problem 1 variable can be mapped to multiple Problem
   2 variables (e.g., digit-wise $x = x_0 + 10x_1$ or split variables $x = x_1 
   + x_2$). Parts of a variable that cannot be detached must remain mapped to 
   that specific variable.
   - SEMANTIC PRIORITY: Variable descriptions are crucial. Analyze the physical
   meaning alongside the mathematical formulation.
3. **COEFFICIENT & MATHEMATICAL PRECISION:**
   - DECIMAL CONVERSION: All fractions (e.g., `1/10`, `frac{{1}}{{100}}`, 
   `1/100*j`) MUST be converted to decimal coefficients (e.g., `0.1`, `0.01`).
   - LINGUISTIC INVERSES: If the description states a Problem 2 variable is 
   "100 times before" or "100 times larger" than Problem 1, use the inverse 
   coefficient as **constant** (e.g., `0.01`) so that $P1\_var = 0.01 \times
   P2\_var$.
   - ALGEBRAIC CONSISTENCY: If you substitute your new mapping into Problem 1's 
   equations, they must become identical to Problem 2's equations.
4. **FORMAT:**
   - Output ONLY a valid raw JSON object. No markdown formatting, no triple 
   backticks, no explanations, no text outside the JSON.
   - Each mapping must be a list of objects: `{{"constant": float, 
   "variable": "string"}}`.
Example Format:
{{  "P1_VarA": [{{"constant": 1.0, "variable": "P2_VarX"}}],
  "P1_VarB": [{{"constant": 0.1, "variable": "P2_VarY"}}, {{"constant": 0.01,
  "variable": "P2_VarZ"}}]}}

\end{Verbatim}
\end{tcolorbox}
\caption{Prompt for Variable Mapping Correction}
\label{fig:prompt-variable}
\end{figure*}

\begin{figure*}
\centering
\tcbset{}
\begin{tcolorbox}[width=\textwidth]
\begin{Verbatim}
You are an expert AI assistant tasked with mapping parameters (constants) 
between two optimization problems.Your goal is to find the exact 
equivalent parameter in Problem 2 for EVERY parameter in Problem 1.To
help you, the CORRECT mapping of variables between these problems is provided.
You must also correct the PREVIOUSLY FAILED parameter mapping.
**Problem 1 (Reference):**
- Parameters: {json.dumps(params1, indent=2)} - Constraints: 
{json.dumps([c.get('formulation') for c in constraints1], indent=2)}
- Objective: {json.dumps(objective1.get('formulation', ''), indent=2)}
**Problem 2 (Target):** - Parameters: {json.dumps(params2, indent=2)}
-Constraints:{json.dumps([c.get('formulation')for c in constraints2],indent=2)}
- Objective: {json.dumps(objective2.get('formulation', ''), indent=2)}
**Correct Variable Mappings (Use this context to align the equations):**
{json.dumps(var_mappings, indent=2)}
**Previous Incorrect Parameter Mapping (TO BE CORRECTED):**
{json.dumps(old_param_mappings, indent=2)}
--- **STRICT RULES FOR PARAMETER MAPPING:** 1. **CORE MAPPING LOGIC:**
   - Every Problem 1 parameter MUST be mapped. Do NOT map to "none".
   - 1-TO-1 MAPPING: Unlike variables, parameters map exactly 1-to-1. Do not map
multiple different Problem 1 parameters to the exact same Problem 2 parameter.
   - SEMANTIC MATCHING: Look at the parameter descriptions first. Costs map to 
   costs, capacities map to capacities, limits map to limits. If words are not
   same look into the meanings they convey.
2. **USE THE VARIABLE MAPPING (ALGEBRAIC CONTEXT):**
   - You MUST use the provided `Correct Variable Mappings` to substitute
   Problem 2's variables into Problem 1's equations.
   - Use this substitution to see which parameters perfectly align in the newly
   balanced equations. 3. **COEFFICIENT RULE: 1.0 DEFAULT **
   - **DEFAULT TO 1.0:** In all cases, the constant for a parameter mapping is 
   exactly `1.0`. Do not invent scaling differences or guess coefficients. 
   (default 1.0 constant). If previously generated was not 1.0 change it to 1.0.
   - In rare cases it will be other value but if other value is submitted as old 
   parameter mapping, then there is high chance that constant is actually 1.0
   (1.0 strictly and only no other value)
4. **MANDATORY CORRECTION (NO REPETITION ALLOWED):**
   - The "Previous Incorrect Parameter Mapping" is entirely WRONG. It failed our 
   equivalence tests.
   - YOUR OUTPUT MUST CHANGE: You are strictly forbidden from outputting the
   exact same mapping you were given as the incorrect baseline. 
   - DIAGNOSE THE FAILURE: The old mapping likely failed because it paired the 
   wrong parameters, or it used a bizarre constant when it should have just been 
   `1.0` (or vice versa, failing to apply a rare mathematical exception). 
   Fix the error and output the corrected version. 5. **FORMAT:**
   - Output ONLY a valid raw JSON object. No markdown, no conversational text.
   - Each mapped value must be a list of objects representing the linear 
   relationship: `{{"constant": float, "parameter": "string"}}`.
Example Output Format (Showing standard 1.0 and a rare exception):
{{"P1_ParamA": [{{"constant": 1.0, "parameter": "P2_ParamX"}}],
  "P1_ParamB": [{{"constant": 0.1, "parameter": "P2_ParamY"}}]}}
\end{Verbatim}
\end{tcolorbox}
\caption{Prompt for parameter mapping correction}
\label{fig:prompt-parameter}
\end{figure*}

\begin{figure}[ht]
\centering
\tcbset{}
\begin{tcolorbox}
\begin{Verbatim}
You are an expert mathematical optimization solver and formal verification
engine.
Your task is to rigorously determine if two Mixed-Integer Programming (MIP) 
formulations (Problem A and Problem B) are strictly equivalent.
Two formulations are equivalent if and only if they share identical feasible 
regions and objective contours. This means they can be perfectly mapped to one 
another through:
1. 1-to-1 renaming of variables and parameters.
2. Algebraic rearrangement or scalar multiplication of constraints (e.g., 
$x + y \le 5$ is equivalent to $10 \ge 2y + 2x$).
3. Reordering of constraints or objective terms.
**Strict Parsing Rule:**
If a parameter or variable shape has only one value 'Integer', you must 
strictly treat it as a scalar entity, not a vector.
**First problem formulation (Problem A):**
{prob1_str}
**Second problem formulation (Problem B):**
{prob2_str}
**Instructions:**
Think step-by-step. Provide a brief analysis mapping the variables, 
parameters, objective functions, and constraints. 
After your analysis, you MUST output your final decision on the VERY LAST 
LINE of your response as exactly one of these two phrases:
Equivalent ; Not Equivalent
\end{Verbatim}
\end{tcolorbox}
\caption{COT-based Prompt for Equivalence Check}
\label{fig:prompt-gemini}
\end{figure}

\begin{figure}[ht]
\centering
\tcbset{}
\begin{tcolorbox}
\begin{Verbatim}
--- Instance from problem_info.json in problem 169 (169_c) ---
"variables": {
        "v": {
            "description": "The total amount of journeys made by cargo planes",
            "type": "continuous",
            "shape": [
                "integer"
            ]
        },
        "n": {
            "description": "The quantity of oversized truck journeys",
            "type": "continuous",
            "shape": [
                "integer"
            ]
        }
--- Instance from problem_info.json in problem 220 (220_c) ---
 "variables": {
        "c": {
            "description": "A binary variable that signifies if any Calcium 
            supplements are consumed.",
            "type": "continuous", "shape": ["Binary"]},
        "q": {
            "description": "Variable that is set to either 1 or 0 to show if any
            Vitamin D supplements have been consumed.",
            "type": "continuous", "shape": ["Binary"]},
        "f": {
            "description": "The overall duration required for the medications to 
            take effect.",
            "type": "continuous",  "shape": ["Continuous"]},
        "e": {
            "description": "The quantity of Calcium tablets consumed within a 
            month.",
            "type": "continuous","shape": ["Integer"]},
        "h": {
            "description": "The quantity of Vitamin D capsules consumed within 
            one month",
            "type": "continuous", "shape": [ "Integer"]}
            --------------------------------------------------
\end{Verbatim}
\end{tcolorbox}
\caption{Examples with irrelevant dimensions}
\label{fig:irrelevant dimension}
\end{figure}

\begin{figure}[ht]
\centering
\tcbset{}
\begin{tcolorbox}
\begin{Verbatim}
You are an expert mathematical optimizer and symbolic algebra engine. 
Your job is to find the exact non-linear algebraic mapping between an original
optimization problem(Formulation A) and its reformulated version(Formulation B).
### Context
We have two versions of the same optimization problem:
- **Formulation A**: Uses bounded or restricted variables.
- **Formulation B**: Uses unconstrained latent variables to simplify 
optimization.
### Data Inputs for Instance ID: {pair_id}
---
[FORMULATION A JSON]
{json_A_str}
---
[FORMULATION B JSON]
{json_B_str}
---
### Instructions
1. **Analyze Domains**: Look at the variable descriptions and constraints in
Formulation A (e.g., variable > 0, or 0 < variable < 1). Match them with
the unconstrained variables in 
Formulation B.
2. **Match Expressions**: Compare the Objective Functions and the Constraints. 
Identify how terms in Formulation A (like `log(CatalystDose)` or 
`MixingRatio / (1 - MixingRatio)`) map  directly to terms in Formulation
B (like `LatentDose` or `exp(LatentRatio)`).
3. **Derive the Forward Mapping**: Express each variable from Formulation A 
explicitly as a function of the variables in Formulation B.
4. **Derive the Inverse Mapping**: Invert your forward equations to express each 
variable from Formulation B explicitly as a function of the variables in
Formulation A.
### Output Format
Provide your final answer strictly in valid JSON format matching the schema 
below. Do not include any conversational prose, explanations outside the JSON,
or markdown code blocks (like ```json).
{{  "pair_id": "{pair_id}",
  "formulation_A_to_B": {{
    "VariableA_1": "expression_in_terms_of_B",
    "VariableA_2": "expression_in_terms_of_B"
  }},
  "inverse_mapping": {{
    "VariableB_1": "expression_in_terms_of_A",
    "VariableB_2": "expression_in_terms_of_A"
  }}
}}"""
\end{Verbatim}
\end{tcolorbox}
\caption{Prompt for non-linear mapping extraction}
\label{fig:dreal}
\end{figure}
\end{document}